\def\d{\delta}
\def\Ph{\boldsymbol{\Phi}}
\def\D{\Delta}
\def\S{\Sigma}
\def\Q{\mathbf{Q}}
\def\f{\mathbf{f}}
\def\N{\mathbf{N}}
\def\V{\mathbf{V}}
\def\RR{\mathbb{R}}
\def\SS{\mathbb{S}}
\newcommand{\points}{\mathcal X}
\newcommand{\loss}{\mathcal L}
\newcommand{\secants}{\mathcal S}
\newcommand{\obs}{\mathbf y}
\newcommand{\x}{\mathbf x}
\newcommand{\X}{\mathbf X}
\newcommand{\linmap}{\boldsymbol{\mathcal{A}}}
\newcommand{\rankr}{r}
\newcommand{\B}{\mathbf B}
\newcommand{\bigO}{\mathcal O}
\newcommand{\bzz}{\mbox{\boldmath{$z$}}}
\newcommand{\mone}{\mbox{\boldmath{$1$}}}
\newtheorem{corollary}{Corollary}
\newtheorem{proposition}{Proposition}
\newtheorem{definition}{Definition}
\newtheorem{theorem}{Theorem}
\newcommand{\bitem}{\begin{itemize}}
\newcommand{\eitem}{\end{itemize}}
\newcommand{\beqn}{\begin{equation}}
\newcommand{\eeqn}{\end{equation}}
\newcommand{\balign}{\begin{align}}
\newcommand{\ealign}{\end{align}}
\begin{document}
%
\title{Energy-aware adaptive bi-Lipschitz embeddings \thanks{This work was supported in part by the European Commission under Grant MIRG-268398, ERC Future Proof, SNF 200021-132548.}}

\author{
\IEEEauthorblockN{Ali Sadeghian}
\IEEEauthorblockA{LIONS, EPFL, Switzerland}
\and
\IEEEauthorblockN{Bubacarr Bah}
\IEEEauthorblockA{LIONS, EPFL, Switzerland}
\and
\IEEEauthorblockN{Volkan Cevher}
\IEEEauthorblockA{LIONS, EPFL, Switzerland}

}


%


\maketitle

\begin{abstract}
We propose a dimensionality reducing matrix design based on training data with constraints on its Frobenius norm and number of rows. Our design criteria is aimed at preserving the distances between the data points in the dimensionality reduced space as much as possible relative to their distances in original data space. This approach can be considered as a deterministic Bi-Lipschitz embedding of the data points. We introduce a scalable learning algorithm, dubbed AMUSE, and provide a rigorous estimation guarantee by leveraging game theoretic tools. We also provide a generalization characterization of our matrix based on our sample data. We use compressive sensing problems as an example application of our problem, where the Frobenius norm design constraint translates into the sensing energy.

\end{abstract}


%
\IEEEpeerreviewmaketitle

\section{Introduction} \label{sec:intro}

Embedding of high dimensional data into lower dimensions is almost a classical subject. Random projections is one way of doing such embeddings and this method rely on the famous Johnson-Lindenstrauss (JL) lemma \cite{johnson1984extensions}. Recently, JL mappings have also found use in compressed sensing (CS), which is a promising alternative to Nyquist sampling \cite{donoho2006compressed}. The current CS theory uses random, non-adaptive matrices and provide recovery guarantees for highly under sampled signals. An key component in the analysis of CS recovery is the restricted isometry property (RIP), \cite{candes2005decoding,rauhut2010compressive}.
\begin{definition}[\cite{candes2005decoding}]
\label{def:rip}
A matrix $\Ph$ satisfies the RIP of order $k$ if the following holds for all vectors ${\bf z}$, which has at most $k$ nonzero entries (i.e., k-sparse): 
\begin{equation}
\label{eq:rip}
(1-\delta_k)\|{\bf z}\|_2^2 \leq \|\Ph {\bf z}\|_2^2 \leq (1+\delta_k)\|{\bf z}\|_2^2.
\end{equation}
\end{definition}
\noindent The RIP constant (RIC) of $\Ph$ of order $k$ is the smallest $\delta_k$ for which \eqref{eq:rip} holds. In the sequel, we use $\delta$ without explicit reference to $k$ for the RIC.

In this paper, we consider {\em adaptivity} in matrix design. Our setting is as follows: we are given a representative data set which can well-approximate an unknown signal. Using this data set, we would like to design a CS matrix that incorporates time and energy constraints while trying to approximate the best RIP matrix. We provide that the embedding we learn is also generalizable to some extent, that is, if a signal is drawn within $\epsilon$ of the data set, then the matrix will have good RIC. We formulate the matrix learning problem into a semidefinite program (SDP) and propose an algorithm leveraging tools from game theory.

The main contribution of this work is that, to the best of our knowledge, it is the first deterministic design that is adaptive to data, uses RIP and gives provable approximation guarantees. A salient feature of our approach is that the design has the {\em digital fountain} property, which makes it nested, that is, if the measurements are not enough, we can still increase the measurements without changing the previous rows of the matrix. In addition, our approach incorporates an important criteria: the {\em energy constraint}, which may also be important for applications beyond CS. The algorithm we propose is also highly {\em scalable}, that is, it works in linear space in the matrix size because it only keep the matrix factors. Experimentally, using the matrices we design for CS seems promising as our matrices outperform those of random projections.

\textbf{Notation:} We define the set of $k$-sparse vectors as $\Sigma_k :=\{{\bf z} \in \mathbb{R}^{n}: \|{\bf z}\|_0\leq k\}$; and the set $\Xi_r := \left\{\X \in \SS_+^{n\times n} : \text{rank}(\X) \leq r \mbox{ and } \|\X\|_{\mathrm{tr}} \leq \lambda \right\}$ for scalars $r>0$ and $\lambda>0$, where $\SS_+^{n\times n}$ is the set of positive semidefinite (PSD) matrices. We denote the $n$-dimensional simplex by $\D^n$.
\begin{definition}[\cite{hegde2012convex}]
\label{def:secants}
Given $\mathbf{x}_l \in \points \subset \S_k$ we define the set of normalized secants vectors of $\points$ as:
\begin{equation}
\label{eq:sec_data}
\secants(\points) := \left\{ {\bf v}_{ij} = \frac{{\bf x}_i - {\bf x}_j}{\|{\bf x}_i - {\bf x}_j\|_2} \mbox{ for } i\neq j \right\}.
\end{equation}
\end{definition}

%

\textbf{Ouline:}  Section \ref{sec:bkgd} is problem statement with a bit of background; while Section \ref{sec:algo} formulates the problem and presents the algorithm. We analyse the algorithm and give generalization bounds in Section \ref{sec:analysis}, followed by empirical results from simulations and conclusions in Sections \ref{sec:empirics} and \ref{sec:concln} respectively.

\section{Background and problem description} \label{sec:bkgd}

The CS literature heavily relies on random matrices in establishing recovery guarantees. There has also been also progress in obtaining structured matrices via randomization.  However, for CS to live up to its promise, real applications must be able to use data adaptive matrices. Attempts have been made in this direction that include what is referred to as optimizing projection matrices which entails reducing the correlation between normalised data points (dictionary) of the given data set, see \cite{elad2007optimized,duarte2009learning}. Our work is in this direction as is \cite{hegde2012convex}. Precisely, this work build on what was done in \cite{hegde2012convex} by learning a projection (embedding) matrix from a given data set via the RIP.  However, in sharp contrast to \cite{hegde2012convex}, our solution provides rigorous approximation guarantees. 

To set up the problem, let us assume that we are given a set of $p \gg n$ sample points (training set) $\points = \lbrace {{\bf x}_j} \rbrace_{j = 1}^p$. Then we impose that the embedding matrix we are learning $\Ph$ satisfy RIP on the pairwise distances of the points in $\points$, that is $\Ph$ satisfies \eqref{eq:rip} with {\bf z} replaced by ${\bf x}_i - {\bf x}_j$ for all ${\bf x}_i, {\bf x}_j \in \points$ where $i\neq j$. $\Ph$ is bi-Lipschitz due to the RIP construct. Theoretical guarantees for this approach relies on results from differential geometry, see \cite{hegde2012convex} and the references therein.

If we replaced {\bf z} in \eqref{eq:rip} by ${\bf x}_i - {\bf x}_j$ and normalized the pairwise distances, then the RIP condition \eqref{eq:rip} on $\secants(\points)$ becomes $(1-\d) \leq {\bf v}_{ij}^T \Ph^T\Ph {\bf v}_{ij} \leq (1-\d)$. This expression simplifies to $| {\bf v}_{ij}^T \Ph^T\Ph {\bf v}_{ij}  - 1| \leq \d$ for each $i\neq j$. Re-indexing the ${\bf v}_{ij}$ to ${\bf v}_l$ for $l = 1, \ldots, M$, where $M = \binom{p}{2}$, we form the $M$ secant vectors $\secants(\points) = \lbrace {\bf v}_1, \ldots,{\bf v}_M \rbrace$ into an $n\times M$ matrix $\V = [{\bf v}_1, \ldots,{\bf v}_M]$ and let $\mathbf{B} = \Ph^T \Ph.$ Then we define a linear transform $\linmap: \SS_+^{n\times n} \rightarrow \RR^M$ as:
\begin{equation}
\label{eq:linmap_secant}
\linmap\left( \mathbf{B}\right) := \text{diag}\left(\V^T \mathbf{B} \V\right),
\end{equation}
where $\mathrm{diag}({\bf H})$ denotes a vector of the entries of the principal diagonal of the matrix ${\bf H}$. Note that the rank of $\mathbf{B}$ is the same as that of $\Ph$ and $\mathbf{B}$ is a PSD self-adjoint matrix. In addition, we place a constraint on the energy of $\mathbf{B}$ to be a fixed budget, say $b$, which adds a trace constraint to our problem and in practice may translates for example to having the entries of $\Ph$ to all have a certain magnitude range. So our problem of adaptively learning an energy-aware RIP matrix $\Ph$ and an RIC $\d$ is equivalent to the following trace constrained affine rank minimization (ARM) problem:
\begin{equation}
\label{eq:adaRIP}
\begin{aligned}
& \mathop{\text{min}}_{\mathbf{B}} & & \|\linmap\left( \mathbf{B}\right) - \mathbf{1}_M\|_\infty \\
& \text{s.t.} & &  \mathbf{B} \succeq 0, \quad  \mbox{rank}{(\B)} = r, \quad \mbox{trace}{(\B)} = b.
\end{aligned}
\end{equation}

In \cite{hegde2012convex}, they solve a different problem by constraining the value of $\delta$. Then they use eigen-decomposition to reach a number of samples. We directly take the constraints, design the matrix and give approximation guarantees. In our case, our algorithm returns the factors directly, which reduces the post processing costs such as taking eigendecompositions. 

\section{Proposed design and our algorithm} \label{sec:algo}

Problem \eqref{eq:adaRIP}, as is common practice for ARM problems, can be relaxed as follows:
\begin{eqnarray}
\label{eq:prob1}
\begin{array}{ll}
\displaystyle \min_{\mathbf{B}} & \|\obs - \linmap(\mathbf{B}) \|_\infty \\
\displaystyle \mbox{s.t.} & \mbox{rank}(\mathbf{B}) \leq \rankr \quad \mbox{and} \quad \|\mathbf{B}\|_{\mathrm{tr}} \leq b.
\end{array}
\end{eqnarray}
where $\obs = \mathbf{1}_M$ and $\|\mathbf{B}\|_{\mathrm{tr}} \leq b$ captures the PSD  and the trace constraints.
Based on the work in \cite{jafarpour2011compressive}, we reformulate \eqref{eq:prob1} as a minimax game next. 


\subsection{Reformulation of \eqref{eq:prob1}}\label{sec:arm_inf}

We first define a linear map $\linmap_{+}: \SS^{n\times n} \rightarrow \RR^{2M}$ where $\linmap_{+}(\mathbf{B})$ is a concatenation of $\linmap(\mathbf{B})$ and $-\linmap(\mathbf{B})$ that is:
$\displaystyle \linmap_{+}(\mathbf{B}) = [\linmap(\mathbf{B})^T , -\linmap(\mathbf{B})^T]^T,$
and correspondingly set $\f = [\obs^T, -\obs^T]^T$. Therefore, we have
\begin{multline} 
\label{eqn:min_max}
\|\obs - \linmap(\mathbf{B}) \|_\infty = \max_{i\in [2M]} |\left[\linmap_{+}(\mathbf{B}) - \f\right]_i| = \\
 \max_{i\in [2M]} \mathbf{e}_i^T \left( \linmap_{+}(\mathbf{B}) - \f\right) = \max_{\N \in \D^{2M}} \loss(\N,\mathbf{B}), 
\end{multline}
where $\displaystyle \loss(\N,\mathbf{B}) := \langle \N, \left( \linmap_{+}(\mathbf{B}) - \f\right)\rangle$ and $\mathbf{e}_i$ is the canonical basis vector. The last equality in \eqref{eqn:min_max} is due to the fact that the maximum of a linear program occurs at a boundary point of the simplex $\D^{2M}$. This reduces  problem \eqref{eq:prob1} to a minimax problem:
\begin{equation}
\label{eq:minmax_game_matrix}
\underset{\mathbf{B} \in \Xi_r}{\text{min}} \quad \underset{\N \in \D^{2M}}{\text{max}} \quad \loss(\N,\mathbf{B})
\end{equation}
where $\Xi_r$ is the primal set, $\D^{2M}$ is the dual set and the mapping $\loss: \Xi_r \times \D^{2M} \rightarrow \RR$ is referred to as the {\em loss function} in game theory. We would need the following $\loss_{\max} := \max_{\N,\mathbf{B}} |\loss(\N,\mathbf{B})| = \|\linmap(\mathbf{B})\|_\infty + \|\obs\|_\infty.$ Note that $\linmap_{+}^{*}: \RR^{2M} \rightarrow \SS^{n\times n}$, which is the adjoint of $\linmap_{+}$, can be expressed in terms of the adjoint of $\linmap$, denoted by $\linmap^{*}$, precisely $\linmap_{+}^{*}({\bf w} ) = \linmap^{*}({\bf w}_1 - {\bf w}_2)$ for ${\bf w} = [{\bf w}_1, {\bf w}_2]^T$ where ${\bf w}_1, {\bf w}_2 \in \RR^{M}$.

\subsection{AMUSE algorithm}\label{sec:arm_inf_algo}

We now propose an algorithm that solves the minimax game \eqref{eq:minmax_game_matrix} with provable theoretical guarantees: 
see \textbf{Algorithm 1}. It is important to note that the algorithm works with rank-1 updates $B^t$ (in a matter similar to the conditional gradient descent algorithms \cite{bertsekas1999nonlinear}). As a result, after $r$ iterations, our algorithm returns an estimator $\widehat{\mathbf{B}} = \frac{1}{r}\sum_{t=1}^{r} \mathbf{B}^t$. As we do not explicitly compute the product of the factors, the algorithm is scalable since each factor corresponds to 1 measurement.  Moreover, we bound the recovery error as thus:
\begin{equation*}
\label{error}
\|\linmap(\widehat{\mathbf{B}}) - \obs \|_{\infty} \leq \underset{\mathbf{B} \in \Xi_r}{\min} ~~\|\linmap\left(\mathbf{B}\right) - \obs \|_{\infty} + \mathcal{O}\left(\frac{1}{\sqrt{r}}\right).
\end{equation*}
This is the first approximation bound for obtaining such sensing matrices.

Essentially, the MUSE for ARM (AMUSE) algorithm we propose is a modification of the Multiplicative Update Selector and Estimator (MUSE) algorithm for learning to play repeated games proposed in \cite{jafarpour2011compressive}. The MUSE itself can be thought of as a restatement of the Multiplicative Weights Algorithm (MWA), which in turn uses the Weighted Majority Algorithm, see  \cite{jafarpour2011compressive} and references therein. We also point out also that the multiplicative updating has connections to Frank-Wolfe and related algorithms \cite{clarkson2010coresets}. 

\begin{table}[h]
\centering
\begin{tabular}{l}
\hline
 \textbf{Algorithm 1} MUSE for ARM (AMUSE) \\
 \hline
 \textbf{Input:} $\obs, ~\eta$\\
 \textbf{Output:}  $\widehat{\mathbf{B}} \approx \mathbf{B}^*$ with rank$( \widehat{\mathbf{B}}) \leq r$\\
 \hline
 \textbf{Initialize} $\N^1=\frac{1}{2M}\mathbf{1}_{2M}$ \\
 \textbf{For } $t = 1, \ldots, r$ \textbf{ do}\\
    \quad $1.$ Find $ \mathbf{B}^t = \mathop{\mbox{argmin}}_{\|\mathbf{B}\|_{\mathrm{tr}} \leq 1} ~\loss(\N^t,\mathbf{B})$ \\
    \quad $2.$ Set $~\displaystyle \Q_j^{t+1} = \N_j^t \cdot e^{\frac{\eta \cdot \loss(\mathbf{e}_j,\mathbf{B}^t)}{\loss_{\max}}}$ for $j \in [2M]$  \qquad \qquad\\
    \quad $3.$ Update $ \N^{t+1} = \frac{\Q^{t+1}}{\sum_{j=1}^{2M} \Q_j^{t+1}}$ \\
     \textbf{End for}\\
 \textbf{Return} $ \widehat{\mathbf{B}} = \frac{1}{r}\sum_{t=1}^{r} \mathbf{B}^t$ \\
 \hline
\end{tabular}
\label{tab:armuse_pcode}
\end{table}

Steps 2 and 3 of the loop of AMUSE performs the multiplicative update of the dual variable $\N$ and the update is exactly the same as in MUSE for a given primal variable at iteration $t$, $\mathbf{B}^t$. Therefore the step size $\eta$ remains the same as in the MUSE algorithm, \cite{jafarpour2011compressive}; that is $\eta = \ln \left(1+\sqrt{{2\ln(2M)}/{r}}\right)$. As a result, the theoretical guarantees given in \cite{jafarpour2011compressive} for MUSE also holds for AMUSE. Basically, for a fixed matrix at iteration $t$, $\mathbf{B}^t$, the proof for the multiplicative update in \cite{jafarpour2011compressive} for the vector case remains the same.

Note that the main and crucial difference between AMUSE and MUSE is the first step of the loop where we update our primal variable $\mathbf{B}$ given our dual variable at iteration $t$, $\N^t$, by $\mathbf{B}^t = \mathop{\mbox{argmin}}_{\|\mathbf{B}\|_{\mathrm{tr}} \leq 1} ~\loss(\N^t,\mathbf{B})$. These updates have rank $1$ and hence their  linear combination, $\widehat{\mathbf{B}}$, has rank at most $r$, since rank is sub-additive.

AMUSE is used to approximate problem \eqref{eq:adaRIP} by rescaling to meet the trace constraint. The parameter $\eta$ remain the same and  $\displaystyle \loss_{\max} = 1 + \max_i ~ \max_j ~ v_{ij}^2$ where $v_{ij}$ is the $(i,j)$ entry of $\V$.


\section{Analysis} \label{sec:analysis}


\subsection{AMUSE guarantees} \label{sec:amuse_rip}

The following theorem formalizes our claim that the AMUSE algorithm outputs an approximate solution $\widehat{\mathbf{B}}$ with rank$(\widehat{\mathbf{B}}) \leq r$ with a bounded $\ell_\infty$ loss in the measurement domain after $r$ iterations. The proof of this theorem use Lemma 4.1 of \cite{jafarpour2011compressive}. 
\begin{theorem}
\label{thm:error_bound}
Let AMUSE return $\widehat{\mathbf{B}}$ after $r$ iterations. Then rank$(\widehat{\mathbf{B}}) \leq r$ and $\|\linmap(\widehat{\mathbf{B}}) - \obs \|_\infty$ is at most
\begin{equation*}
\label{eq:error_bound}
\|{\bf e}\|_\infty + \left(1+\sqrt{2}\right) \cdot \left( 2\|\linmap(\widehat{\mathbf{B}})\|_\infty + \|{\bf e}\|_\infty \right) \sqrt{\frac{\ln(2M)}{r}},
\end{equation*}
where ${\bf e}$ measures the perturbation of the linear model. 
\end{theorem}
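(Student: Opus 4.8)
The plan is to prove the two assertions separately, handling the rank bound by a convex-geometry argument and the $\ell_\infty$ estimate by the regret analysis of the multiplicative-weights scheme underlying AMUSE. For the rank, I would first note that Step~1 of Algorithm~1 minimizes the map $\mathbf{B}\mapsto\loss(\N^t,\mathbf{B})$, which is affine in $\mathbf{B}$ since $\loss(\N^t,\mathbf{B})=\langle \linmapp^{*}(\N^t),\mathbf{B}\rangle-\langle\N^t,\f\rangle$, over the PSD trace-norm ball $\{\mathbf{B}\succeq 0:\|\mathbf{B}\|_{\mathrm{tr}}\le 1\}$. The minimum of an affine functional over a compact convex set is attained at an extreme point, and the extreme points of this ball are the zero matrix and the trace-one rank-one PSD matrices; hence every $\mathbf{B}^t$ has rank at most one. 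Since rank is subadditive, $\widehat{\mathbf{B}}=\frac1r\sum_{t=1}^{r}\mathbf{B}^t$ has rank at most $r$, giving the first claim.

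For the error bound, I would reduce the residual to the value of the game \eqref{eq:minmax_game_matrix}. Using the chain of equalities in \eqref{eqn:min_max} together with the linearity of $\loss$ in its second argument,
\[
\|\obs-\linmap(\widehat{\mathbf{B}})\|_\infty=\max_{\N\in\D^{2M}}\loss(\N,\widehat{\mathbf{B}})=\max_{\N\in\D^{2M}}\frac1r\sum_{t=1}^{r}\loss(\N,\mathbf{B}^t),
\]
which is precisely the quantity controlled by the dual regret of the multiplicative-weights update. Invoking Lemma~4.1 of \cite{jafarpour2011compressive} with the prescribed step size $\eta=\ln\!\left(1+\sqrt{2\ln(2M)/r}\right)$ yields $\max_{\N}\frac1r\sum_t\loss(\N,\mathbf{B}^t)\le \frac1r\sum_t\loss(\N^t,\mathbf{B}^t)+(1+\sqrt{2})\,\loss_{\max}\sqrt{\ln(2M)/r}$.

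It then remains to bound the two surviving terms. For the cumulative-loss term I would use the best-response property of Step~1: writing $\mathbf{B}^\star$ for the minimizer of the rescaled relaxed problem \eqref{eq:prob1}, $\loss(\N^t,\mathbf{B}^t)=\min_{\|\mathbf{B}\|_{\mathrm{tr}}\le 1}\loss(\N^t,\mathbf{B})\le\loss(\N^t,\mathbf{B}^\star)$, so averaging and passing to the worst-case dual play gives $\frac1r\sum_t\loss(\N^t,\mathbf{B}^t)\le\max_{\N}\loss(\N,\mathbf{B}^\star)=\|\obs-\linmap(\mathbf{B}^\star)\|_\infty=\|\mathbf{e}\|_\infty$, the irreducible model perturbation. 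For the regret coefficient I would expand $\loss_{\max}=\|\linmap(\mathbf{B})\|_\infty+\|\obs\|_\infty$ and bound $\|\obs\|_\infty$ via the triangle inequality and the definition $\mathbf{e}=\obs-\linmap(\mathbf{B}^\star)$, arriving at $\loss_{\max}\le 2\|\linmap(\widehat{\mathbf{B}})\|_\infty+\|\mathbf{e}\|_\infty$. Combining the three displays reproduces the claimed inequality.

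The hard part will be pinning down the constants rather than the overall architecture. Concretely, I expect two delicate points: verifying that Lemma~4.1 applied with the stated $\eta$ produces exactly the factor $(1+\sqrt{2})\sqrt{\ln(2M)/r}$ (the regret constant is sensitive to how the gains are normalized by $\loss_{\max}$ in Step~2), and performing the bound on $\|\obs\|_\infty$ so that only $\|\linmap(\widehat{\mathbf{B}})\|_\infty$ and $\|\mathbf{e}\|_\infty$ appear without reintroducing the residual $\|\obs-\linmap(\widehat{\mathbf{B}})\|_\infty$, which would make the estimate circular; this requires controlling $\|\linmap(\mathbf{B}^\star)\|_\infty$ by $\|\linmap(\widehat{\mathbf{B}})\|_\infty$. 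The remaining ingredients --- affinity of $\loss$, the extreme-point characterization, the best-response step, and the min-max identity --- are routine.
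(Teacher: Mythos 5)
Your proposal follows essentially the same route as the paper's own (sketched) proof: both reduce the residual to $\max_{\N}\loss(\N,\widehat{\mathbf{B}})$, invoke Lemma~4.1 of \cite{jafarpour2011compressive} with the same step size $\eta$ to get the $(1+\sqrt{2})\loss_{\max}\sqrt{\ln(2M)/r}$ regret term, bound the cumulative loss by the min-max value and hence by $\|\mathbf{e}\|_\infty$, and control $\loss_{\max}$ by the same triangle-inequality step. Your write-up is if anything more explicit than the paper's sketch---it supplies the extreme-point argument for the rank bound and needs only the best-response/weak-duality direction rather than the full min-max theorem---and the two ``delicate points'' you flag (the exact regret constant and avoiding circularity when replacing $\|\linmap(\mathbf{B}^\star)\|_\infty$ by $\|\linmap(\widehat{\mathbf{B}})\|_\infty$) are precisely the places where the paper's own proof is equally loose.
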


\begin{proof}
We sketch the proof as follows, for details see \cite{jafarpour2011compressive}. By the definition of $\linmap, ~\obs$ and $\loss$, $\displaystyle \|\linmap(\widehat{\mathbf{B}}) - \obs \|_\infty = \max_{\N} \loss(\N,\widehat{\mathbf{B}})$. Then we first show that $\displaystyle \min_{\mathbf{B}} \max_{\N} \loss(\N,\mathbf{B}) + (1+\sqrt{2}) \loss_{\max} \sqrt{\frac{\ln(2M)}{r}}$ upper bounds $\displaystyle \max_{\N} \loss(\N,\widehat{\mathbf{B}})$, a key ingredient of which is the min-max theorem. Next we deduce that $\displaystyle \min_{\mathbf{B}} \max_{\N} \loss(\N,\mathbf{B}) = \min_{\mathbf{B}} \|\linmap(\mathbf{B}) - \obs \|_\infty \leq \|{\bf e}\|_\infty.$ Then, using the triangle inequality we bound $\loss_{\max}$ by bounding $\|\obs\|_\infty$ as thus: $\loss_{\max} = \|\linmap(\mathbf{B})\|_\infty + \|\obs\|_\infty$ which is upper bounded by $2\|\linmap(\mathbf{B})\|_\infty + \|{\bf e}\|_\infty.$
\end{proof}

Furthermore, we bound the error of the output of AMUSE for the RIP matrix learning problem in Corollary \ref{cor:error_bound} which follows from Theorem \ref{thm:error_bound}.
\begin{corollary}
\label{cor:error_bound}
Let AMUSE learn an RIP matrix $\widehat{\mathbf{B}}$ from a given data set $\points$ after $r$ iterations with RIC $\widehat{\d}$. Assume that the optimal RIP matrix for that $\points$ has RIC $\d^*$. Then $\widehat{\mathbf{B}}$ has rank$(\widehat{\mathbf{B}}) \leq r$ and
\begin{equation*}
\label{eq:error_bound_secant}
\|\linmap(\widehat{\mathbf{B}}) - \mone_M \|_\infty \leq \d^* + 2(1+\sqrt{2}) \sqrt{\frac{\ln(2M)}{r}}.
\end{equation*}
\end{corollary}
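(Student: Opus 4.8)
The plan is to read the RIP matrix-learning instance as the special case of Theorem~\ref{thm:error_bound} in which the observation vector is $\obs = \mone_M$, and then to evaluate the two instance-dependent quantities appearing in that bound: the model perturbation $\|{\bf e}\|_\infty$ and the output energy $\|\linmap(\widehat{\mathbf{B}})\|_\infty$. The rank statement rank$(\widehat{\mathbf{B}})\le r$ carries over verbatim, since $\widehat{\mathbf{B}}$ is an average of the $r$ rank-one primal updates $\mathbf{B}^t$ produced by Algorithm~1 and rank is subadditive; so only the error estimate requires work.

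First I would identify the perturbation with the optimal RIC. By definition the best RIP matrix for $\points$ is a minimizer of $\|\linmap(\mathbf{B}) - \mone_M\|_\infty$ over the feasible set, and its constant $\d^*$ is exactly that minimal value, i.e. $\min_{\mathbf{B}}\|\linmap(\mathbf{B}) - \mone_M\|_\infty = \d^*$. The proof of Theorem~\ref{thm:error_bound} already shows $\min_{\mathbf{B}}\max_{\N}\loss(\N,\mathbf{B}) = \min_{\mathbf{B}}\|\linmap(\mathbf{B}) - \obs\|_\infty \le \|{\bf e}\|_\infty$, so here it is consistent to take $\|{\bf e}\|_\infty = \d^*$, which turns the leading term into $\d^*$. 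Next I would bound $\|\linmap(\widehat{\mathbf{B}})\|_\infty \le 1$: the secants are normalized, so $\|{\bf v}_l\|_2 = 1$, while the trace-ball feasibility of each update (after rescaling the energy budget to unity) gives $\|\widehat{\mathbf{B}}\|_{\mathrm{tr}} \le 1$; since $\widehat{\mathbf{B}}\succeq 0$ its operator norm is at most its trace norm, whence $0 \le [\linmap(\widehat{\mathbf{B}})]_l = {\bf v}_l^T \widehat{\mathbf{B}} {\bf v}_l \le \|\widehat{\mathbf{B}}\|_{\mathrm{tr}} \le 1$.

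With these two facts in hand I would \emph{not} plug directly into the displayed factor $\bigl(2\|\linmap(\widehat{\mathbf{B}})\|_\infty + \|{\bf e}\|_\infty\bigr)$ of Theorem~\ref{thm:error_bound} — doing so only yields the looser constant $2+\d^*$ — but would instead step back one line in its proof to the quantity $\loss_{\max} = \|\linmap(\widehat{\mathbf{B}})\|_\infty + \|\obs\|_\infty$. Because $\obs = \mone_M$ gives the exact value $\|\obs\|_\infty = 1$, this yields $\loss_{\max} \le 1 + 1 = 2$, and substituting into the min-max inequality $\|\linmap(\widehat{\mathbf{B}}) - \obs\|_\infty \le \d^* + (1+\sqrt{2})\,\loss_{\max}\sqrt{\ln(2M)/r}$ produces exactly the claimed $\d^* + 2(1+\sqrt{2})\sqrt{\ln(2M)/r}$. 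I expect the only delicate point to be the energy normalization behind $\|\linmap(\widehat{\mathbf{B}})\|_\infty \le 1$: one must make the rescaling to the unit trace budget explicit and check that it is compatible with targeting $\obs = \mone_M$, since it is this exact pairing $\|\obs\|_\infty = 1$ together with $\|\linmap(\widehat{\mathbf{B}})\|_\infty \le 1$ that collapses the constant to $2$ rather than to $2+\d^*$.
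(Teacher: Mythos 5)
Your proposal is correct and follows essentially the route the paper intends: the paper gives no explicit proof of the corollary beyond ``follows from Theorem~\ref{thm:error_bound},'' and the intended instantiation is exactly yours --- take $\obs=\mone_M$, identify $\|{\bf e}\|_\infty$ with $\d^*$ via $\min_{\mathbf{B}}\|\linmap(\mathbf{B})-\mone_M\|_\infty=\d^*$, and bound $\loss_{\max}\le 2$ using the normalized secants and the unit-trace feasibility of the iterates (consistent with the paper's own statement $\loss_{\max}=1+\max_{i,j}v_{ij}^2\le 2$ for this instance). Your observation that one must step back to the intermediate inequality involving $\loss_{\max}$ --- since plugging into the theorem's displayed factor $2\|\linmap(\widehat{\mathbf{B}})\|_\infty+\|{\bf e}\|_\infty$ only gives the looser constant $2+\d^*$ --- is precisely the detail needed to recover the stated constant $2(1+\sqrt{2})$.
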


This implies that if the optimal solution $\Ph^*$ has RIC $\d^*$ on the training set, then our approximation, $\widehat{\Ph}$, of $\Ph^*$ also satisfies RIP on these data points but with a slightly larger constant $\hat{\d} \leq \d^* + \bigO\left(1/\sqrt{r}\right)$. As the dimensions increase, we approximate the best RIP constant for the given dataset. 


\subsection{Generalization bounds} \label{sec:gen_bounds}


Interestingly, we can provably approximate the optimal RIC even for points that are outside our sample points as stated in the following proposition.
\begin{proposition}
\label{pro:RIC_distant_pts}
Given the pair $\d$ and $\Ph$ as the optimal solution to \eqref{eq:adaRIP}, $\Ph$ applied to any $\bzz$ with $\|\bzz - \x\|_2 \leq \epsilon$ for all $\x \in \points$ and $\epsilon \in [0,1)$ gives an RIC, $\bar{\d}$, bounded as follows:
\begin{equation}
\label{eq:ric_gen_bound}
\bar{\d} \leq (\d + \epsilon)/(1 - \epsilon).
\end{equation}
\end{proposition}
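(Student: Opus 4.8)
The plan is to show that the near-isometry certified by the RIP on the training secants $\secants(\points)$ degrades only mildly under an $\epsilon$-perturbation of the underlying data. Fix a new signal $\bzz$ with $\|\bzz - \x\|_2 \le \epsilon$, let $\x \in \points$ be the nearest data point, and write $\bzz = \x + \mathbf{e}$ with $\|\mathbf{e}\|_2 \le \epsilon$. The quantity $\bar{\d}$ is the RIC of $\Ph$ on the secants generated by $\bzz$, i.e. the smallest constant with $\big|\,\|\Ph\mathbf{v}\|_2^2 - 1\,\big| \le \bar{\d}$ for the corresponding normalized secant $\mathbf{v}$. The key observation is that each secant that $\bzz$ produces decomposes as $\bzz - \x_j = (\x - \x_j) + \mathbf{e}$, namely a genuine training secant $\x - \x_j$ on which $\Ph$ is already controlled by $\d$ through \eqref{eq:adaRIP}, plus a short error term of length at most $\epsilon$. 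Hence the statement reduces to tracking how the image $\|\Ph(\bzz-\x_j)\|_2$ and the normalizer $\|\bzz-\x_j\|_2$ drift when $\mathbf{e}$ is added.

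For the numerator I would use the triangle inequality $\|\Ph(\bzz-\x_j)\|_2 \le \|\Ph(\x-\x_j)\|_2 + \|\Ph\mathbf{e}\|_2$, where the first term is bounded by the optimal RIP guarantee (contributing the additive $\d$) and the second is the perturbation term. For the normalizer I would use the reverse triangle inequality $\|\bzz-\x_j\|_2 \ge \|\x-\x_j\|_2 - \|\mathbf{e}\|_2$, which after normalizing the secant produces the $1/(1-\epsilon)$ factor appearing in \eqref{eq:ric_gen_bound}. Dividing the numerator bound by the normalizer bound, then collecting the additive $\d$ from the RIP term together with the additive $\epsilon$ from the perturbation and the renormalization, yields a bound of the claimed form $\bar{\d} \le (\d+\epsilon)/(1-\epsilon)$. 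The lower RIP direction is handled symmetrically, replacing the triangle inequality by its reverse in the numerator, so the two-sided estimate closes.

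The main obstacle is controlling the term $\|\Ph\mathbf{e}\|_2$: the RIP only certifies $\Ph$ on the data secants, not on the arbitrary error direction $\mathbf{e}$, so this quantity is not immediately bounded. I would resolve it by invoking the Frobenius/energy constraint $\mathrm{trace}(\mathbf{B}) = b$ in \eqref{eq:adaRIP}, which caps the operator norm of $\Ph$ and hence its action on $\mathbf{e}$, or equivalently by exploiting the bi-Lipschitz behaviour of $\Ph$ on the relevant secant directions so that $\|\Ph\mathbf{e}\|_2 \lesssim \epsilon$. The delicate point is to keep the cross contribution $\scal{\Ph(\x-\x_j)}{\Ph\mathbf{e}}$ under control so that only a single power of $\epsilon$ survives in the final constant; once this is in hand, the remainder is routine triangle-inequality bookkeeping.
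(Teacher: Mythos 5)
Your overall template---decompose the new vector into a training vector plus a small error and run triangle inequalities---matches the paper's, but the step you yourself flag as ``the main obstacle'' (bounding $\|\Ph\mathbf{e}\|_2$) is a genuine gap, and neither of your proposed fixes closes it. The trace/energy route gives $\|\Ph\mathbf{e}\|_2 \leq \|\Ph\|_F\,\|\mathbf{e}\|_2 \leq \sqrt{b}\,\epsilon$, which injects the budget $b$ into the final estimate: you would obtain something like $(\d + \sqrt{b}\,\epsilon)/(1-\epsilon)$ rather than the claimed $(\d+\epsilon)/(1-\epsilon)$, since nothing in \eqref{eq:adaRIP} forces $b \leq 1$. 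Your alternative---``exploiting the bi-Lipschitz behaviour of $\Ph$ on the relevant secant directions''---is circular: the only directions on which \eqref{eq:adaRIP} certifies anything are the training secants $\secants(\points)$, and $\mathbf{e}$ is an arbitrary direction, which is exactly why this term was problematic in the first place. A secondary issue: in your secant framing the renormalization $\|\bzz-\x_j\|_2 \geq \|\x-\x_j\|_2 - \epsilon$ cannot be the source of the $1/(1-\epsilon)$ factor, because nothing lower-bounds the pairwise distances $\|\x-\x_j\|_2$ by $1$; when a pairwise distance is comparable to $\epsilon$ your ratio bound degenerates.

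The idea you are missing is the paper's bootstrap (self-referential) argument, which sidesteps any independent control of $\|\Ph\mathbf{e}\|_2$. The paper works with the perturbed point itself (normalized so $\|\x\|_2 = \|\bzz\|_2 = 1$) and with \emph{unsquared} norms, so no cross term ever appears. It defines $\alpha_1$ as the smallest constant such that $\|\Ph\mathbf{v}\|_2 \leq (1+\alpha_1)\|\mathbf{v}\|_2$ for every vector $\mathbf{v}$ in the perturbed class---in particular for the perturbation direction itself. The triangle inequality then gives
\begin{equation*}
\|\Ph\bzz\|_2 \;\leq\; (1+\d)\|\x\|_2 + (1+\alpha_1)\|\bzz-\x\|_2 \;\leq\; (1+\d) + (1+\alpha_1)\,\epsilon,
\end{equation*}
and taking the worst case over the class yields the self-consistency inequality $(1+\alpha_1) \leq (1+\d) + (1+\alpha_1)\epsilon$, which is solvable for $\alpha_1$ precisely because $\epsilon < 1$, giving $\alpha_1 \leq (\d+\epsilon)/(1-\epsilon)$. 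This resolution of the self-referential bound is where the $1/(1-\epsilon)$ factor actually comes from. The lower bound is handled symmetrically with a constant $\alpha_2 \leq (\d+\epsilon)/(1+\epsilon)$, and $\bar{\d} = \max(\alpha_1,\alpha_2) = \alpha_1$. If you replace your operator-norm/bi-Lipschitz step with this bootstrap, the rest of your bookkeeping goes through.
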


\begin{proof}
Since $\Ph$ is linear w.l.o.g let  $\|\x\|_2 = 1$. For any $\bzz$ such that $\|\bzz - \x\|_2 \leq \epsilon$ and $\|\bzz\|_2 = 1$ then $\|\Ph \bzz\|_2$ can be written as: $$\|\Ph\left(\x - (\bzz - \x)\right)\|_2 \leq \|\Ph \x\|_2 + \|\Ph (\bzz - \x)\|_2$$ using the triangle inequality. Let $\alpha_1$ be the smallest constant such that $\|\Ph \bzz\|_2 \leq (1+\alpha_1)\|\bzz\|_2$ then with the definition of $\d$ from the above inequality we have $$\|\Ph \bzz\|_2 \leq (1+\d)\|\x\|_2 + (1+\alpha_1)\|\bzz - \x\|_2.$$ Evaluating and upper bounding the norms and using the definition of $\alpha_1$ gives $$(1+\alpha_1) \leq (1+\d) + (1+\alpha_1)\epsilon.$$ This simplifies to $\alpha_1 \leq (\d + \epsilon)/(1-\epsilon)$. Similarly, we lower bound $\|\Ph \bzz\|_2$ and have an $\alpha_2$ to be the largest constant such that $\|\Ph \bzz\|_2 \geq (1-\alpha_1)\|\bzz\|_2$, this leads to a bound on $\alpha_2$ as thus: $\alpha_2 \leq (\d + \epsilon)/(1+\epsilon)$. The RIC, $\bar{\d}$, is therefore given by $\max (\alpha_1,\alpha_2)$ and for the values of $\epsilon$ considered this is $\alpha_1$, hence \eqref{eq:ric_gen_bound}. 
\end{proof}


\section{Empirical results} \label{sec:empirics}

We use the synthetic data set of images of translations of white squares in a black background from \cite{hegde2012convex}. In the first experiment we investigate the dependence of RIC we learn on the number of rows (or rank) of the $\Ph$ we learn. Here, we use $M = 2000$ number of secants vectors. We use the same for PCA projected to meet the trace constraint of our problem \eqref{eq:adaRIP} and also generate a random Gaussian matrix also constrained to have trace as our problem. Figure \ref{fig:measurements} displays this comparison, where our method clearly outperforms PCA and random designs.
\begin{figure}[h]
\centering
\includegraphics[width=0.8\columnwidth]{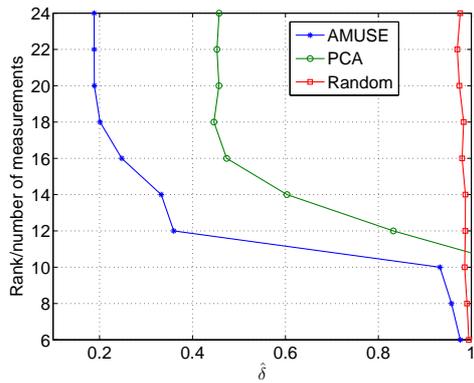}
\caption{A plot of the number of measurements (or rank of the $\widehat{\Ph}$) as a function of the RIC $\widehat{\d}$ for data points with an ambient dimension  $n = 256$.}
\label{fig:measurements}
\end{figure}

In the second experiment we learn a $\Ph$ from the data and use it to encode a randomly selected subset of $\points$ corrupted with Gaussian noise of varying signal-to-noise ratio (SNR). We then do Basis Pursuit denoising to decode these points. For comparison we use a Gaussian matrix with the trace-constrained and compute the mean-square error (MSE) over the subset. The results are displayed in Figure \ref{fig:cs_error}, which show that our approach outperforms the random projections due to its adaptivity to the underlying data manifold. Note that in this experiment, we simply searched over Frobenius norm constraint to approximate the RIC without any energy constraint. 

\begin{figure}[h]
\centering
\includegraphics[width=0.8\columnwidth]{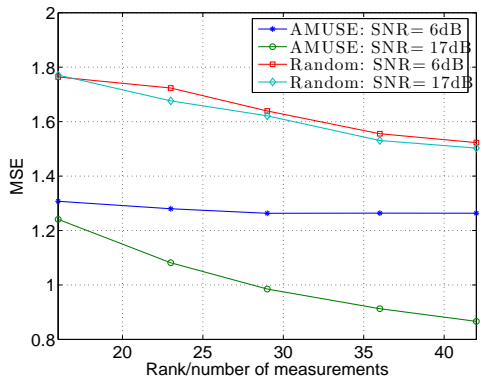}
\caption{CS recovery performance of our adaptive approach compared to energy constrained random projections.}
\label{fig:cs_error}
\end{figure}


\section{Conclusions} \label{sec:concln}

We reformulate the adaptive learning of a data embedding into an optimization problem and propose an algorithm that approximately solves this problem with provable guarantees. We show generalizability of our embedding to a test data set $\epsilon$ away from the training set in terms of the RIC of the embedding matrix learnt. Our experiments show better performance of our derived matrices as compared to random designs with regard to the empirical RIC and CS recovery.

\end{document}